\newtheorem{definition}{Definition}
\newtheorem{theorem}{Theorem}
\title{Two-Bit Aggregation for Communication Efficient and Differentially Private Federated Learning}
\author {
    Mohammad Aghapour\equalcontrib\textsuperscript{\rm 1},
    Aidin Ferdowsi\equalcontrib\textsuperscript{\rm 2},
    Walid Saad \textsuperscript{\rm 2}
}
\begin{document}

\maketitle

\begin{abstract}
In federated learning (FL), a machine learning model is trained on multiple nodes in a decentralized manner, while keeping the data local and not shared with other nodes. However, FL requires the nodes to also send information on the model parameters to a central server for aggregation. However, the information sent from the nodes to the server may reveal some details about each node's local data, thus raising privacy concerns. Furthermore, the repetitive uplink transmission from the nodes to the server may result in a communication overhead and network congestion. To address these two challenges, in this paper, a novel two-bit aggregation algorithm is proposed with guaranteed differential privacy and reduced uplink communication overhead. Extensive experiments demonstrate that the proposed aggregation algorithm can achieve the same performance as state-of-the-art approaches on datasets such as MNIST, Fashion MNIST, CIFAR-10, and CIFAR-100, while ensuring differential privacy and improving communication efficiency.

{\bf Keywords:} Federated Learning, Communication Efficiency, Differential Privacy

\end{abstract}
\section{Introduction}
Mobile phones, smart watches, and Internet of things (IoT) devices are examples of distributed networks that generate a wealth of data each day. The powerful computation capabilities of the devices within these networks and the importance of user data privacy led to the adoption of edge computation techniques where the data is localized and computations are done at the devices. Federated learning (FL) is a promising new framework for edge computing that enables distributed training of machine learning models while preserving user privacy. In FL, a large number of clients share their model parameters with a central server to learn a robust and comprehensive model without sharing their own training datasets \cite{mcmahan2017communication}.

FL typically consists of two main steps: (1) locally training models on edge devices with private local datasets and (2) aggregation of local models at the central server to come up with a more generalized model that can be used by all nodes. In centralized FL, a central server coordinates all the participating nodes during the learning process and aggregation of received model updates. Nodes send their updates to the server for aggregation, then all clients will receive aggregated new parameters. This approach prevents the nodes from transferring raw data to the server and improves privacy. The most challenging step in FL is aggregation within which the privacy of the local datasets can be attacked by reverse engineering the updates from the nodes. A significant advantage of FL is training a general model without need for direct access to the raw training data. The classical and standard method for aggregation is FEDAVG \cite{mcmahan2017communication}, in which the final weights are calculated by averaging element-wise on parameters of local devices. In \cite{pillutla2019robust}, the authors presented a robust aggregation approach to make FL robust to scenarios in which a fraction of the devices may be sending corrupted updates to the server. Their approach relies on an aggregation oracle based on the geometric median, which returns a robust aggregate using a constant number of calls to a regular non-robust secure average oracle.

In FL, because training processes take place at each client, the attack surface is limited to devices, instead of devices and the cloud, which makes it more secure and private \cite{mcmahan2017communication}. However, the recent work \cite{geiping2020inverting} shows that it is possible to reconstruct images at high resolution when having access to FL parameter gradients even for trained deep networks. In FL, private information can be extracted by analyzing the differences of updates from the clients. To prevent recovering raw data from trained weights, differential privacy (DP) is proposed in which, instead of sending trained parameters, agents incorporate some randomization into their shared updates with the central server. This randomization anonymizes the nodes and makes the revealing of the private raw data challenging for a potential adversary. DP algorithms mainly rely on adding a random noise into the updates, resulting in anonymous updates which makes it difficult to breach the privacy \cite{dwork2006calibrating}.
In \cite{wei2020federated}, the authors presented an FL framework based on DP, in which each client locally add noises to its trained parameters before uploading them to the server for aggregation, and they called it noising before model aggregation. In \cite{wang2019collecting}, the authors proposed local DP mechanisms for collecting a numeric attribute in contrast to global DP mechanisms. In local DP, each user modifies its information locally and only sends the randomized version to the server protecting both the users and the server from private information leaks. The authors in \cite{wu2020value} presented DP-based stochastic gradient descent (SGD) algorithms and studied their performance limits, which were shown to be linked to privacy settings and dataset sizes. The work in \cite{xie2018differentially} proposed a differentially private generative adversarial network (GAN) model, in which they achieved a higher privacy by adding a designed noise to the parameters of the model. The work in \cite{harder2021dp} proposed a DP data generation algorithm based on the random feature representation of kernel mean embeddings. When compared to GAN-based techniques, this method requires a considerably smaller privacy budget to create excellent data samples.
 
In addition to privacy challenges, the transmission of local updates to the central server can result in a huge communication overhead in the uplink causing delay and congestion in large scale FL scenarios. \cite{konevcny2016federated} proposed two approaches that address this challenge. One of them is structured updates, in which they learn an update directly from a restricted space parameterized by a smaller number of variables, such as low-rank or random masks. The other approach is sketched updates, in which they learn a full model update and then compress it using a combination of quantization, random rotations, and sub-sampling before sending it to the server. \cite{guo2020analog} used analogue over-the-air transmission to examine the analogue gradient aggregation approach for overcoming the communication bottleneck for wireless FL applications. The federated matched averaging (FedMA) method was introduced by \cite{wang2020federated} for FL of modern neural network architectures such as convolutional neural networks (CNNs) and long short-term memory (LSTM) networks. FedMA builds the shared global model layer by layer by matching and averaging hidden components (i.e. channels for convolution layers; hidden states for LSTM; neurons for fully connected layers) with comparable feature extraction signatures. FedMA not only outperforms popular state of the art federated learning algorithms on deep CNN and LSTM architectures trained on real-world datasets, but it significantly decreases total communication overhead, according to their findings.

However, the works in \cite{dwork2006calibrating,wei2020federated,wang2019collecting,wu2020value,xie2018differentially,harder2021dp} improve the privacy of the nodes with a cost of a loss in the model performance. 
Using DP to improve privacy generally decreases algorithm performance, and, hence, there is a key trade-off between privacy and convergence performance in the training process. In addition, these methods guarantee global DP which may fail to preserve privacy in some scenarios. Furthermore, the approaches described in \cite{mcmahan2017communication,pillutla2019robust,guo2020analog,wang2020federated} that are proposed to overcome the communication inefficiency of the FL algorithms fail to provide privacy guarantees.

The main contribution of this paper is a novel federated aggregation process that is communication efficient and differentially private. In our proposed method, the node updates will contain only two bits for each parameter of the learning model; one bit representing the absolute value and one for the sign of each parameter. Moreover, we propose a new aggregation mechanism that combines the received two-bit updates from the nodes and returns updated parameters to the nodes. The proposed update and aggregation mechanisms reduce the communication overhead significantly on the uplink depending on the binary representation (BR) of the model parameters. Moreover, we prove that the proposed mechanism provides close to absolute DP especially with large number of bits used in the BR of the parameters. Our experiments show that the proposed mechanism has similar performance to the state of the art FL works on the known datasets such as CIFAR-10, MNIST, and MNIST-fashion while reducing the communication overhead and providing DP for nodes. For instance, for a 32-bit BR, the communication overhead reduces by a factor of 1/16 and the DP can be improved by 1 order of magnitude.

\section{System Model and Problem Formulation}
Let $\mathcal{N}$ be a set of $n$ nodes such that each node $i$ has a local datasets $\mathcal{D}_i$. In practice, a portion of the nodes may not be available, however, in our analysis we consider that all the nodes are available for the entire process of learning. Let $ f $ be the global parameterized machine learning (ML) model that each node aims at learning on its own dataset. In FL, each agent $ i $ trains the model $ f $ on its own dataset $ \mathcal{D}_i $ locally to learn the parameters $ \boldsymbol{w}_i $ of $ f $ and after every $ e $ epochs of training it shares an update $\boldsymbol{U}_i$ about its model with the central server. After receiving all the updates from the nodes, the server aggregates the updates using an aggregation function $ h $ and returns a new parameter set $ \boldsymbol{w} $ to the nodes. Formally, the new parameter set at the $ k $-th step of the aggregation can be written as:
\begin{equation}\label{eq:aggregation}
	\boldsymbol{w}^{k+1} = h\left(f,\boldsymbol{w}^k , \boldsymbol{U}^k_{i| \forall i \in \mathcal{N}}\right).
\end{equation}
The goal of the server is to design an aggregator mechanism $ h $ in \eqref{eq:aggregation} such that the model after convergence will have a higher performance than a standalone case in which the nodes train the model on their local dataset without participating in FL. Moreover, the $\boldsymbol{U}_i$ for the nodes should be designed such that the communication overhead on the uplink is reduced and the local data privacy of the nodes is preserved. 

Prior FL works such as \cite{pillutla2019robust,guo2020analog,wei2020federated,wang2019collecting,wu2020value} use an additive noise on either the model parameters or the training gradients to design a privacy preserving mechanism. However, the size of the transmitted data is the same as the size of the model parameters, making those FL solutions communication-inefficient. In addition, those algorithms do not guarantee a good level of privacy because the local data may be regenerated by the updates transmitted from the nodes \cite{geiping2020inverting}. In the following, we propose a novel update and aggregation mechanisms that provides uplink communication efficiency as well as differential privacy while not impacting the performance of the FL process.

\section{Two-Bit Aggregation Approach}
We propose a novel two-bit aggregation algorithm that does not share the actual parameters of the local models with the central server hence it preserves the privacy of local devices. Similar to FedAVG each node performs its own computation based on $ f $ and its own dataset $\mathcal{D}_i$, however, the update and aggregation steps are different.
We propose a mapping $\mathcal{M}$ that, after each FL iteration, i.e. after every $e$ epochs, maps each local gradient of the model $g_j$ into a two bit BR, $\boldsymbol{u}_j$, and transmits it to the central server. Thus, the update vector $\boldsymbol{u}_j$ contains the two bit representations of each gradient. In contrast to the state-of-the-art FL algorithms in which every gradient value is represented by a $p$-bit binary format where $p$ can typically be 16, 32, or 64, in our proposed algorithm, regardless of the BR of the values in the computation stage each gradient is mapped into a to a 2-bit BR. This reduces the uplink communication overhead by $p/2$ times compared to the other FL algorithms. For instance, using our proposed algorithm for a 64-bit BR in computation stage, the size of the $\boldsymbol{u}_i$ is reduced by 32 times. In addition, this approach improves the privacy of the nodes as a more abstract information is transmitted from each local node to the server. In fact, we prove that our proposed algorithm provides guarantees differential privacy. In what follows, we explain the details of the proposed mapping algorithm.
\subsection{Two-Bit Mapping Algorithm}
The proposed mapping of gradients to a two-bit BR has two main steps: scaling and bit selection. Although scaling comes before bit selection, we will go over bit selection first for clarity. After each FL step, when the new gradients are available at the local nodes, the local nodes transmit a two bit vector $\boldsymbol{u}$ for each gradient to the server. The first bit represents the sign of the gradient while the second bit is derived from the absolute value of the gradient. In order to derive this bit, first, absolute value of each gradient is converted to a fixed-point BR $\boldsymbol{b}$. The node then attaches the $l$-th bit of the BR to the sign bit and transmits these two bits for each gradient to the server, where $l$ is the server's requested location of the bit in the BR. Later we will define how the server chooses the value of $l$. We also note that, in the fixed point BR, number of bits for integer and fraction part of the number is fixed. In our algorithm, we assume that out of $p$ bits $p-1$ of them are for integer part and 1 bit is for the sign. This essentially means that no fractions are taken into account for the BR of the values. However, as more training epochs pass, the absolute value of the gradients reduces. Although this is not a problem in a floating point BR, it does pose some difficulties in a fixed-point BR. When using a fixed-point BR, the BR of a gradient uses fewer bits out of available bits as its absolute value decreases. To address this issue at each FL iteration, the server informs the nodes about the predicted maximum absolute value of the next iteration. In other words, the server assumes in the next iteration the absolute value of the calculated gradients will not exceed a certain value $m$. Thus, knowing $m$, each node will first multiply the calculated gradient by $ 2^{p-1} / m $ since $ 2^{p-1} $ is the largest value that can be constructed by $ p-1 $ bits. If the resulting value is greater than $ 2^{p-1} $, we use $ 2^{p-1} $. Note that, $m$ is updated at every FL iteration based on the gradients received from the nodes. The steps of the mapping algorithm are summarized in Figure \ref{fig:mapping_fig} and Algorithm \ref{alg:Mapping_alg}.

\begin{figure}
	\centering
	\includegraphics[width=\columnwidth]{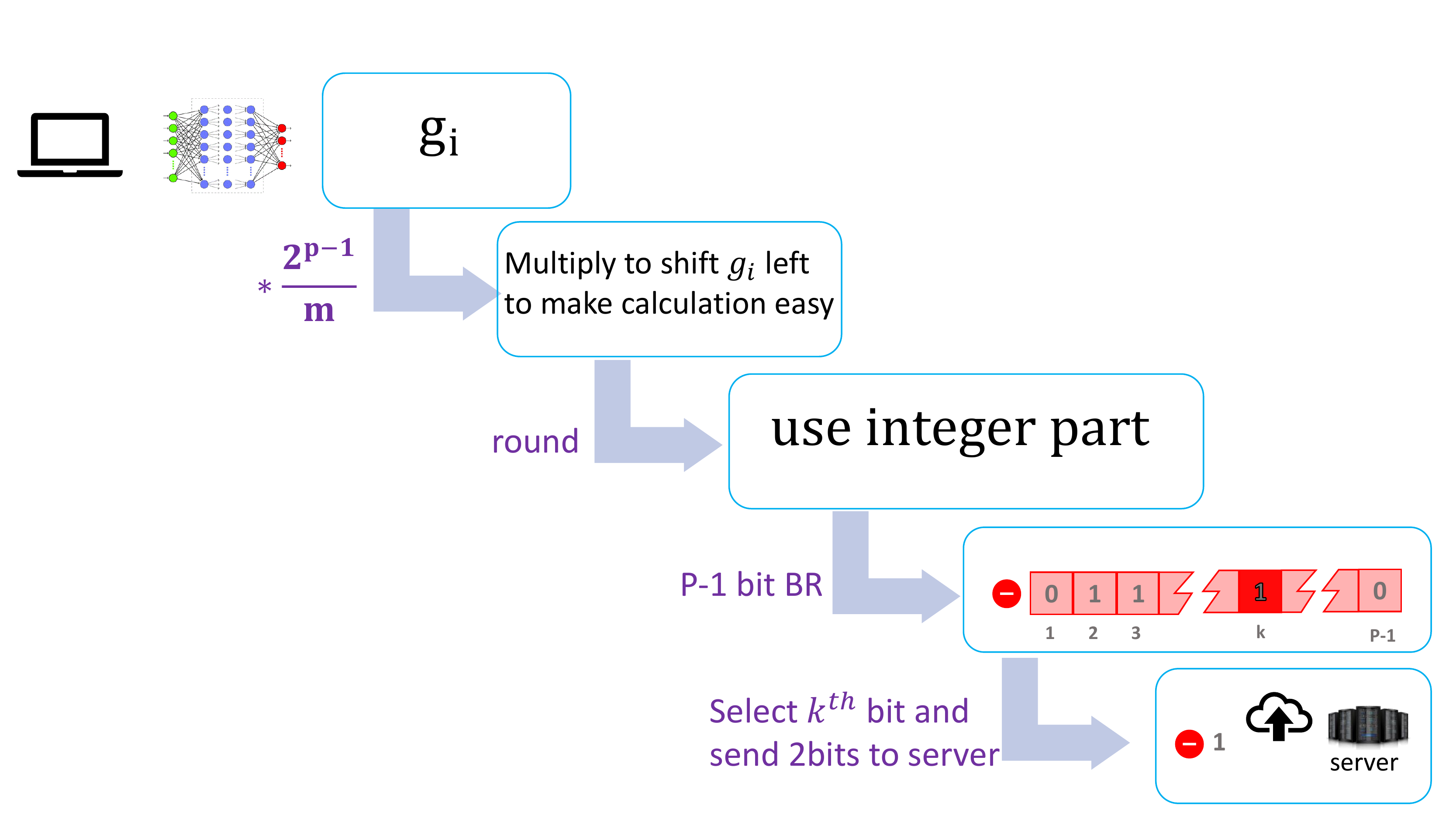}
	\caption{Mapping Process}
	\label{fig:mapping_fig}
\end{figure}

As mentioned, the proposed two bit mapping significantly reduces the uplink communication overhead. In what follows, we also prove that the proposed mapping also provides differential privacy. To this end, first we formally define differential privacy.

\begin{definition}
	(Differential Privacy\cite{dwork2006calibrating,dwork2006our}) A randomized algorithm $ \mathcal{M} $ is $ (\epsilon) $-differentially private if for all neighboring BRs $\boldsymbol{b}$ and $\bar{\boldsymbol{b}}$, and for all sets $ \mathcal{F} $ of outputs,
	\begin{equation}
		\textrm{Pr}\left[ \mathcal{M}(\boldsymbol{b}) \in \mathcal{F} \right] \leq exp(\epsilon).\textrm{Pr}\left[ \mathcal{M}(\bar{\boldsymbol{b}}) \in \mathcal{F} \right]
	\end{equation}
	The probability is taken over the random coins of $ \mathcal{M} $.
\end{definition}
Next, we prove that our proposed mapping is differentially private.
\begin{theorem}
	The proposed two-bit mapping $ \mathcal{M} $ is $ (\ln \frac{p}{p-2}) $-differentially private.
\end{theorem}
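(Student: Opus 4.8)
The plan is to treat $\mathcal{M}$ as a mechanism whose only source of randomness is the uniformly random choice of the bit location $l$, and to exploit the fact that the output alphabet is tiny — only the four pairs $(\text{sign},\text{bit})\in\{0,1\}^2$ — so that it suffices to verify the defining inequality for singleton sets $\mathcal{F}$ and then extend to an arbitrary $\mathcal{F}$ by summing the per-outcome inequalities. Concretely, I would first fix notation: for a fixed-point representation $\boldsymbol{b}$ with one sign bit and $p-1$ magnitude bits, write $n_0(\boldsymbol{b})$ and $n_1(\boldsymbol{b})$ for the numbers of magnitude positions carrying a $0$ and a $1$, so $n_0(\boldsymbol{b})+n_1(\boldsymbol{b})=p-1$. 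Since $l$ is drawn uniformly from the magnitude positions and the sign is passed through deterministically, for any outcome $(\sigma,c)$ one gets $\textrm{Pr}[\mathcal{M}(\boldsymbol{b})=(\sigma,c)]=\frac{n_c(\boldsymbol{b})}{p-1}$ when $\sigma$ is the sign of $\boldsymbol{b}$, and $0$ otherwise.

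Next I would invoke the neighbor relation: neighboring representations $\boldsymbol{b},\bar{\boldsymbol{b}}$ differ in a single bit, hence $\lvert n_c(\boldsymbol{b})-n_c(\bar{\boldsymbol{b}})\rvert\le 1$ for each $c$; for this to be the relevant quantity they must agree on the sign bit, which I would either build into the neighbor relation or dispatch as the degenerate case where both probabilities vanish. For a singleton $\mathcal{F}=\{(\sigma,c)\}$ the ratio $\textrm{Pr}[\mathcal{M}(\boldsymbol{b})\in\mathcal{F}]/\textrm{Pr}[\mathcal{M}(\bar{\boldsymbol{b}})\in\mathcal{F}]$ is then either $0$ or $n_c(\boldsymbol{b})/n_c(\bar{\boldsymbol{b}})\le 1+\frac{1}{n_c(\bar{\boldsymbol{b}})}$, and maximising over all neighboring pairs reduces to lower-bounding the smallest attainable count $n_c$. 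The extremal case is the balanced representation, where the two counts take the central values $p/2$ and $p/2-1$, and there the ratio evaluates to exactly $1+\frac{2}{p-2}=\frac{p}{p-2}=\exp\!\big(\ln\frac{p}{p-2}\big)$. Summing over the at most four outcomes in a general $\mathcal{F}$ then upgrades the singleton bound to the full definition, completing the argument.

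The main obstacle — and the step needing the most care — is justifying that the worst case over admissible representations is the (near-)balanced one rather than a very sparse or very dense representation, for which $n_c$ could be $0$ or $1$ and the naive ratio would blow up. I expect this to rest on the scaling step of the mapping (multiplying by $2^{p-1}/m$ and capping at $2^{p-1}$), which is designed precisely to keep the quantised magnitude from collapsing into the low-order bits; making "the representation is balanced enough'' precise from that preprocessing, together with pinning down exactly which positions $l$ ranges over, is where the real work lies. A secondary point to handle explicitly is the sign bit: a neighbor that flips the sign makes the two output distributions have disjoint supports, so the theorem implicitly requires the neighbor relation on $\boldsymbol{b}$ to act only on the magnitude bits (or treats the sign as public), and I would state this assumption up front.
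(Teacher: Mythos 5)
Your per-outcome computation is the honest one for this mechanism: with the input fixed and the only randomness being the uniform choice of $l$ among the $p-1$ magnitude positions, $\textrm{Pr}[\mathcal{M}(\boldsymbol{b})=(\sigma,c)]=n_c(\boldsymbol{b})/(p-1)$ and the neighbor ratio is $n_c(\boldsymbol{b})/n_c(\bar{\boldsymbol{b}})$. But the step you flag as the main obstacle is not merely delicate --- it fails. The scaling by $2^{p-1}/m$ with capping does nothing to balance the bit counts: a gradient at the cap is represented as $10\cdots0$, so $n_1=1$, and a neighbor with $n_1=2$ gives a ratio of $2$, which exceeds $\frac{p}{p-2}$ for every $p>4$; worse, $n_c(\bar{\boldsymbol{b}})=0$ against $n_c(\boldsymbol{b})=1$ gives an infinite ratio, so under your model the mechanism is not $\epsilon$-DP for any finite $\epsilon$, let alone $\ln\frac{p}{p-2}$. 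There is no lemma of the form ``the quantised representation is balanced enough'' to be extracted from the preprocessing, so the route through worst-case counts cannot reach the stated bound. (Also, a sign-flipping neighbor cannot be ``dispatched as degenerate'': it makes one probability positive and the other zero, so it must be excluded from the neighbor relation outright, which is what the paper does by letting only bits at locations $2$ to $p$ differ.)

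The paper closes the hole you identified by changing what the probability is taken over: it treats every non-differing magnitude bit as equal to $0$ or $1$ with probability $\tfrac{1}{2}$, so that $\textrm{Pr}(\mathcal{M}(\boldsymbol{x})=(b_1,0))=\frac{p-2}{2(p-1)}$, while the neighbor whose differing bit is deterministically $0$ picks up an extra $\frac{1}{p-1}$, giving the ratio $\frac{p}{p-2}$ exactly. In other words, the paper's randomness includes an implicit uniform prior on the input's bits rather than only the coin choosing $l$; whether that is a legitimate reading of the DP definition (which quantifies over worst-case fixed neighboring inputs, with probability taken only over the mechanism's coins) is a separate criticism one could level at the paper, but it is the assumption that makes the arithmetic work and the one your write-up would need to import --- and state explicitly --- to complete the argument.
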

\begin{proof}
	Let $\boldsymbol{x}$ and $\bar{\boldsymbol{x}}$ be two neighboring $p$-bit BR of gradients. In this case, only one of the bits at locations 2 to $p$ differs for $\boldsymbol{x}$ and $\bar{\boldsymbol{x}}$. Without loss of generality, let the bit at location $p$ be different, then we will have:
	\begin{equation}
		\boldsymbol{x}=(b_1,b_2,b_3,...,b_p) , \bar{\boldsymbol{x}}=(b_1,b_2,b_3,...,\bar{b}_p)
	\end{equation}
	Since $ b_1 $ is the sign bit, the output of the mapping $ \mathcal{M} $ will be either $ (b_1,0) $ or $ (b_1,1) $. Thus, next we will compare the probability of these two outputs for $ \boldsymbol{x} $ and $ \bar{\boldsymbol{x}} $.
	Assuming $ b_p = 1 $, the probability of $ \mathcal{M}(\boldsymbol{x}) = (b_1,0) $ can be written as follow:
	\begin{align}
		\textrm{Pr}(\mathcal{M}(\boldsymbol{x}) = (b_1,0)) &=\underbrace{ \frac{1}{p-1} \times \frac{1}{2} + ... + \frac{1}{p-1} \times \frac{1}{2}}_{p-2 \textrm{ terms}} \nonumber\\\label{eq:epsilonprivate}
		&= \frac{p-2}{(p-1)\times 2}
	\end{align}
	In \eqref{eq:epsilonprivate}, the probability of selecting any non-sign bit in $ \boldsymbol{x} $ is $ \frac{1}{p-1} $ and the probability of selected bit being 0 is $ \frac{1}{2} $. In this case, since $ \textrm{Pr}(\bar{b}_p =0)=1 $, then $ \textrm{Pr}(\mathcal{M}(\bar{\boldsymbol{x}}) = (b_1,0)) = \frac{p-2}{(p-1)\times 2} + \frac{1}{p-1} $. Therefore we will have
	\begin{equation}
		\frac{\textrm{Pr}(\mathcal{M}(\boldsymbol{x})=(b_1,0))}{\textrm{Pr}(\mathcal{M}(\bar{\boldsymbol{x}})=(b_1,0))}=\begin{cases}
			\frac{p-2}{p}, & \text{if $b_p=0$}\\
			\frac{p}{p-2}, & \text{if $b_p=1$}
		\end{cases}
	\end{equation}
	Therefore,
	\begin{equation}
		\frac{\textrm{Pr}(\mathcal{M}(\boldsymbol{x})=(b_1,0))}{\textrm{Pr}(\mathcal{M}(\bar{\boldsymbol{x}})=(b_1,0))} \leq \frac{p}{p-2}
	\end{equation}
	By following similar steps, we can also show that 
	$\frac{\textrm{Pr}(\mathcal{M}(\boldsymbol{x})=(b_1,1))}{\textrm{Pr}(\mathcal{M}(\bar{\boldsymbol{x}})=(b_1,1))} \leq \frac{p}{p-2}$.
	Therefore, we have
	\begin{equation}
		\textrm{Pr}(\mathcal{M}(\boldsymbol{x}))\leq \frac{p}{p-2}\textrm{Pr}(\mathcal{M}(\bar{\boldsymbol{x}})),
	\end{equation}
	which means $\mathcal{M}$ is a $ (\ln \frac{p}{p-2}) $-differentially private.
\end{proof}
Theorem 1 shows that the proposed 2-bit mapping is differentially private. In addition, we can see from Theorem 1 that, as the size of the BR ($p$) increases, the algorithm becomes more private since $\ln \frac{p}{p-2}$ goes to zero. Next, we show how the server chooses the location of the bit from the BR to be transmitted by local nodes.

\begin{algorithm}[tb]
	\caption{Mapping algorithm}
	\label{alg:Mapping_alg}
	\textbf{Input}:  $ g_i , m$\\
	\textbf{Output}: $\boldsymbol{U}_i$
	\begin{algorithmic}[1] 
		\STATE Multiply all $ g_i $ to $ \frac{2^{p-1}}{m} $ 
		\STATE Represent integer part of all $ g_i $ by $ p-1 $-bit binary format.
		\FOR{each weight of model $ f $}
		\FOR{each node}
		\STATE $ \boldsymbol{u}_i $ value is combination of $ i $-th node's sign value and server selected bit's value.
		\ENDFOR
		\STATE Generate $ \boldsymbol{U}_i $ by concatenating $ \boldsymbol{u}_i $ values
		\ENDFOR
		\STATE \textbf{return} $\boldsymbol{U}_i$ 
	\end{algorithmic}
\end{algorithm}

\subsection{Bit Selection and Aggregation at The Server}

\begin{algorithm}[tb]
	\caption{Aggregation algorithm}
	\label{alg:aggregation_alg}
	\textbf{Input}:  $\boldsymbol{U}_i$\\
	\textbf{Output}: $ \boldsymbol{w} $
	\begin{algorithmic}[1] 
		\FOR{each weight of model $ f $}
		\IF {number of nodes greater than $ p-1 $}
		\STATE cluster nodes which sent same bit together and to have $ p-1 $ clusters.
		\ENDIF
		\FOR{each cluster}
		\STATE Separate positive and negative values
		\FOR{each bit}
		\STATE Do majority voting to find final bit value (unfilled bits replace with zero).
		\ENDFOR
		\STATE Place bits in their corresponding locations and make a positive and a negative $ p-1 $-bit number.
		\STATE re-scale these numbers to their real value by multiplying them to $ \frac{m}{2^{p-1}} $
		\STATE new weight value calculates by weighted averaging these two numbers.(their weight amount is the number of positive or negative bits used for each value)
		\ENDFOR
		\STATE update $ m $ value to the maximum amount between of $ m $ and absolute value of these two number 
		\ENDFOR
		
		\STATE \textbf{return} $ \boldsymbol{w},m $
	\end{algorithmic}
\end{algorithm}

\begin{figure}[]
	\centering
	\includegraphics[width = \columnwidth]{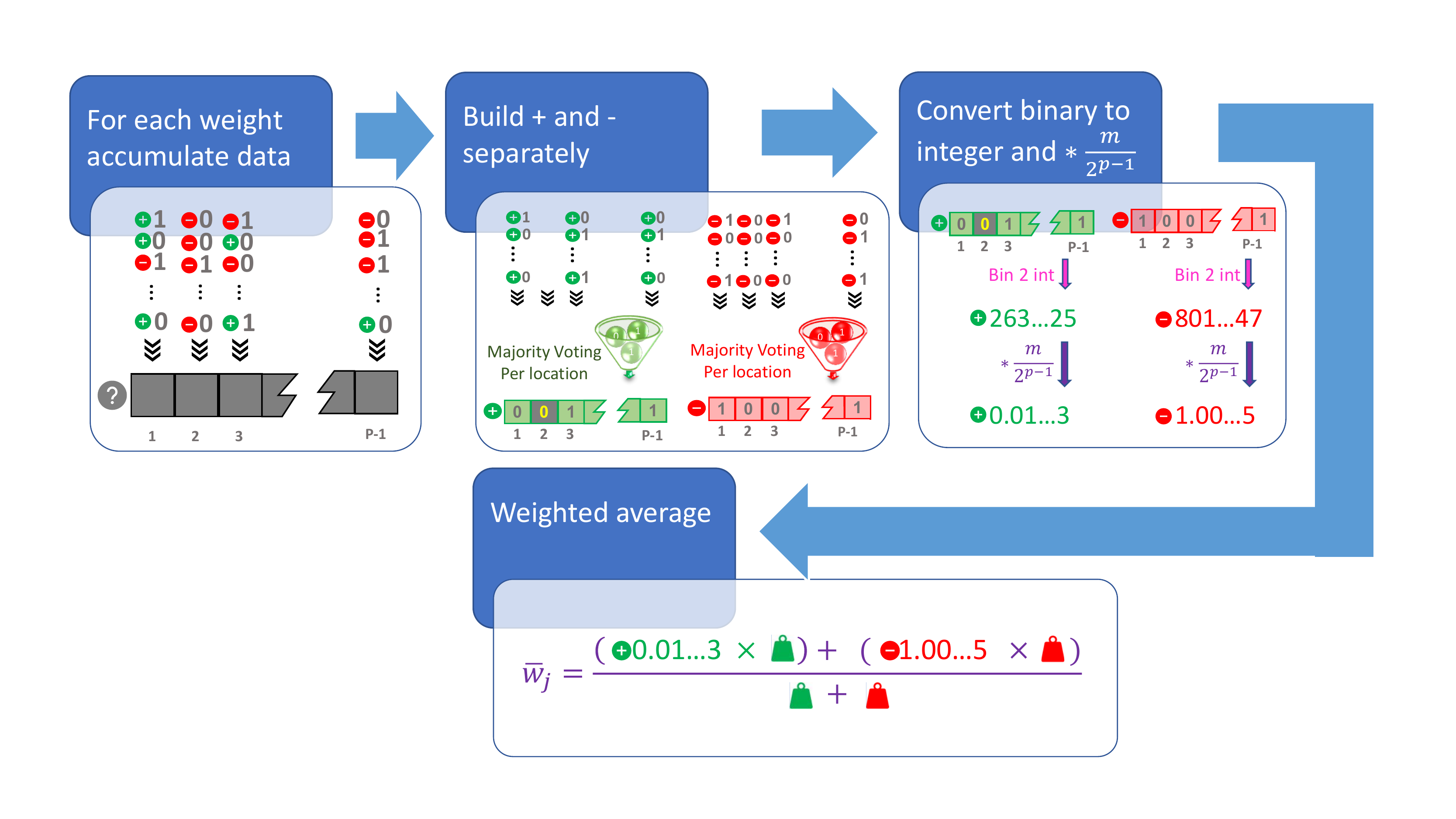}
	\caption{Aggregation Process}
	\label{fig:Aggregation_fig}
\end{figure}

Assuming $n\geq p-1$, at every FL aggregation round, the server randomly assigns the location of a bit in the BR to a local node such that each bit is assigned at least to one local node. Then, at each FL iteration, besides the updated weights of the model, the server transmits the requested bit location to the local nodes. Each local node will use the requested bit location for the mapping $\mathcal{M}$ of the next FL iteration. Also, in order to reduce the downlink communication overhead, the server only sends one location request to each node rather than sending one location for every parameter of the model. Then, each node $i$ increments the location by 1 for each row of the update $\boldsymbol{U}_i$. For example if the server's requested location is $l$, then the node uses $l$ for the first row of the update $\boldsymbol{U}_i$, uses $l+1 $ for the second row of the update $\boldsymbol{U}_i$, and so on. This increases the randomization and helps in a better convergence based on our experiments.

\begin{figure*}[t]
	\centering
	\subfloat[MNIST]{\includegraphics[width=3.2cm, angle =90]{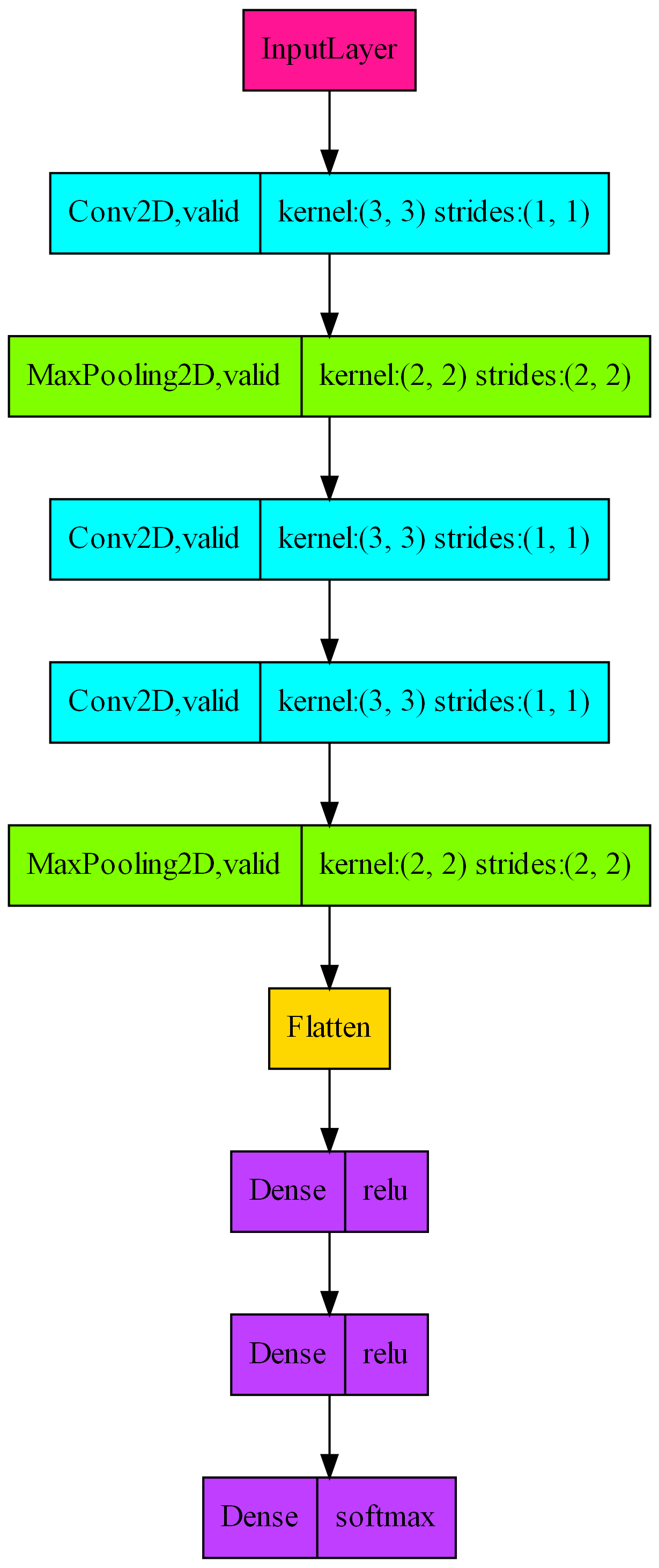}}
	\vspace{0.00mm}
	\\
	\subfloat[Fashion MNIST]{\includegraphics[width=3.2cm, angle =90]{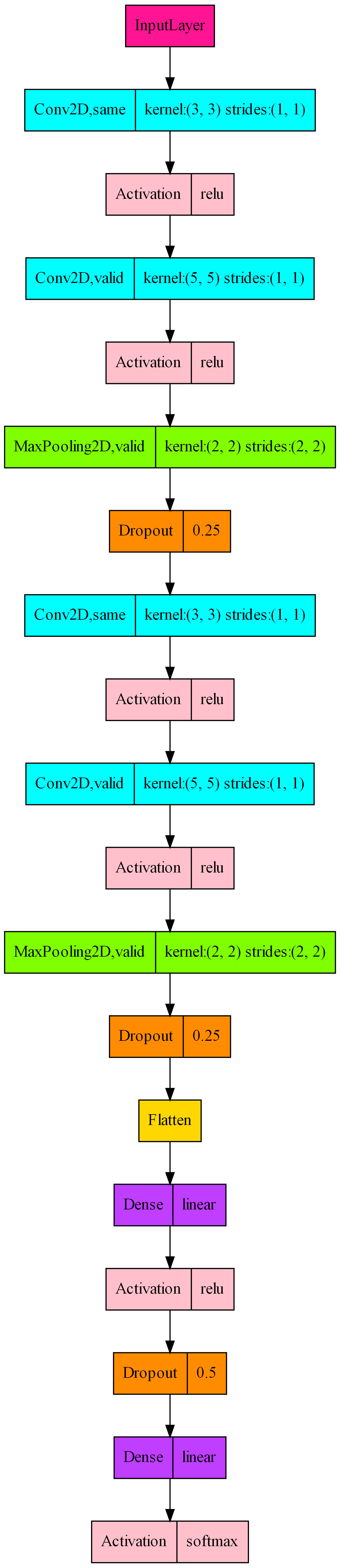}}
	\vspace{0.00mm}
	\qquad
	\subfloat[CIFAR 10]{\includegraphics[width=3cm, angle =90]{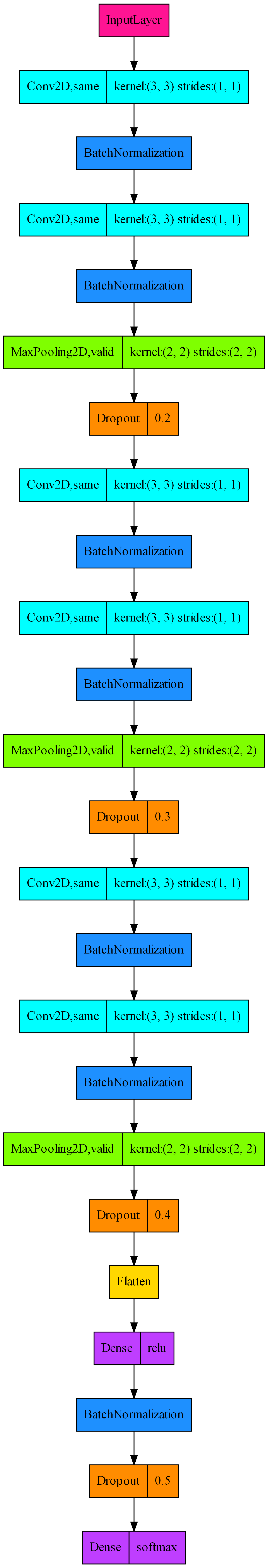}}
	\vspace{0.00mm}
	\qquad
	\subfloat[CIFAR 100]{\includegraphics[width=3cm, angle =90]{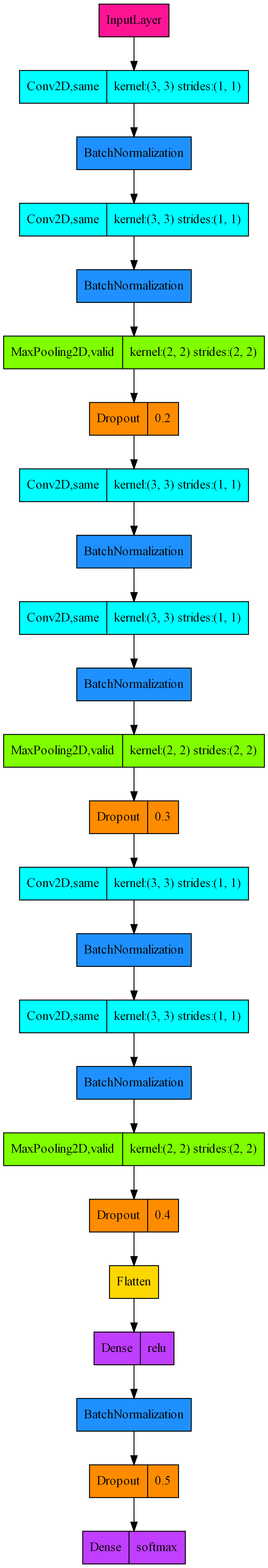}}
	\caption{Architectures used for training on each dataset.}
	\vspace{0.00mm}
	\label{dataset}
\end{figure*}

For aggregation, the server first groups the received updates with the same bit location into two groups based on their sign bit. In other words, every update $\boldsymbol{u}_{j}$ from nodes that were assigned the same bit location is clustered into the negative group if its first bit is 0 and is cluster into positive group otherwise. Next, if more than one node exists in a group, the server uses a majority voting to decide what the non-sign bit should be. For some locations either negative or positive group may not exist. In this case we consider the bit for the non existing group of that location to be zero. After finding the bits for every location, the server puts them in their location and generates a negative $p-1$-bit BR based on negative groups and a positive $p-1$-bit BR based on the positive groups. Next, after converting these two BR to decimal values we get the weighted average of the derived negative and positive values where the weight of negative and positive numbers are proportional to the number of nodes with negative and positive updates. Finally, we multiply the weighted average by $ m / 2^{q-1}$.  After server runs these steps on every $\boldsymbol{u}_{j}$, it updates the value of $m$ by finding the maximum absolute value of the updates. Figure \ref{fig:Aggregation_fig} and Algorithm \ref{alg:aggregation_alg} summarize the bit selection and aggregation algorithms.

By following the two steps of the proposed two-bit aggregation, i.e., mapping and aggregation steps, the nodes are able to preserve their local data privacy and improve the uplink communication efficiency. In the following, we present our experiments on different datasets and FL scenarios.

\section{Experimental Results and Analysis}

In our experiments, we set $ n $=31, $ e $=10, and $ p $=32 unless stated otherwise. We split the data into training and test sets with a 80/20 ratio.  For each node, we distribute the training dataset by dividing the number of training data by the number of nodes $ n $. However, we use the same test dataset for all of the nodes.

\subsection{Datasets and Model Architectures}

For our experiment, we employed various well-known datasets, such as MNIST, Fashion MNIST, CIFAR-10, and CIFAR-100, with state of the art methods to evaluate our proposed Two-bit algorithm. Figure \ref{dataset} shows the architecture of the models used for each dataset. We have used 3 Tesla P-100 GPUs for training of our models.

\subsection{Performance Comparison}

\begin{table}[]
	\begin{center}
		\caption{Comparison of achieved accuracy of the proposed aggregator versus other solutions.}
		\label{tab:table1}
		\begin{tabular}{|c|c|c|c|c|}
			\hline
			&
			MNIST &
			\begin{tabular}[c]{@{}c@{}}Fashion\\ MNIST\end{tabular} &
			\begin{tabular}[c]{@{}c@{}}CIFAR\\ 10\end{tabular} &
			\begin{tabular}[c]{@{}c@{}}CIFAR\\ 100\end{tabular} \\ \hline
			Standalone                                                         & 95.74\%          & 83.52\%          & 55.34\%          & 18.28\%          \\ \hline
			FedAVG                                                             & 98.83\%          & 85.66\%          & 64.21\%          & 38.62\%          \\ \hline
			\begin{tabular}[c]{@{}c@{}}DP-enabled \\ FedAVG\end{tabular} & 98.85\%          & 85.29\%          & 64.11\%          & 37.90\%          \\ \hline
			\textbf{Two-bit}                                                   & \textbf{98.83\%} & \textbf{85.42\%} & \textbf{64.23\%} & \textbf{38.33\%} \\ \hline
		\end{tabular}
	\end{center}
\end{table}
First, we compare our proposed mechanism with FedAVG \cite{mcmahan2017communication}, FedAVG with DP \cite{mcmahan2018general}, and a standalone agent with no FL. For a fair comparison we have adjusted the parameters of the additive noise in DP-enabled FedAVG to yield a similar $\epsilon$ as our method. Table \ref{tab:table1} shows the average accuracy of the models after convergence. As shown in Table 1, the proposed two-bit aggregation always achieves almost the same accuracy as the other FL methods. This shows that the proposed framework will not impact the FL process in terms of performance while providing DP and communication efficiency.

\subsection{Communication Efficiency}

\begin{figure}[t!]
	\centering
	\includegraphics[width=\columnwidth]{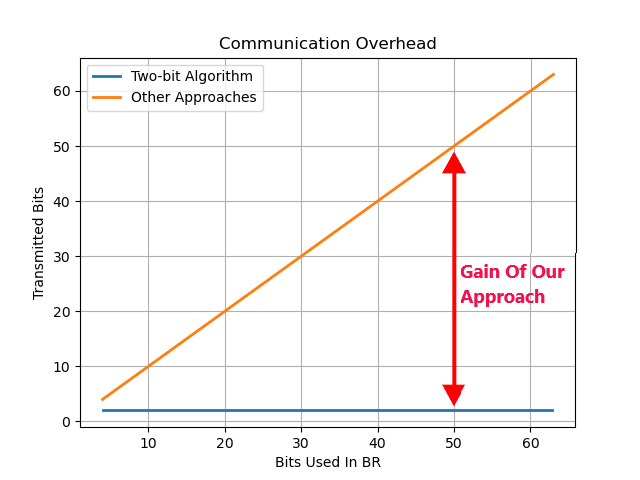}
	\caption{Comparison of Proposed Two-bit aggregation with other approaches in terms of communication overhead}
	\label{fig:communication_overhead}
\end{figure}

Figure \ref{fig:communication_overhead} shows a comparison of the per iteration communication overhead of our proposed 2-bit aggregator versus the other FL algorithms. As we can see from Figure \ref{fig:communication_overhead}, for any BR size, $p$, our proposed algorithm always requires transmitting 2 bits per parameter, however, all the other FL algorithms' communication overhead increases linearly with respect to the number of bits used in the BR. This is a clear benefit of our proposed mechanism over the other methods: Our approach can improve the communication efficiency by the factor of $p/2$ compared to other FL solutions.

\subsection{Differential Privacy Efficiency}

\begin{figure}[t!]
	\centering
	\includegraphics[width=\columnwidth]{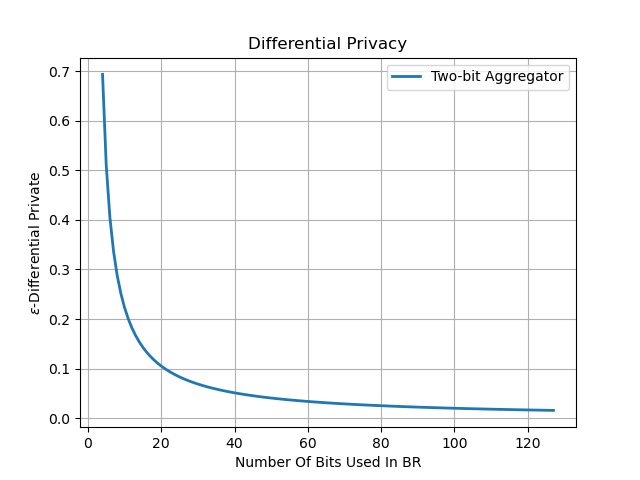}
	\caption{Impact of number of bits used in BR on Differential Privacy}
	\label{fig:dp}
\end{figure}

Figure \ref{fig:dp} shows how our proposed aggregator's $\epsilon$ decays when we increase the number of bits in BR. The reason for this is that by increasing the size of the BR, the randomized selection algorithms perform better, and our transmitted bit is more anonymized, resulting in a more privacy-preserving algorithm. From this figure, we observe that, even for a case where $p=4$, i.e., only 4 bits used for BR, we achieve a DP with $\epsilon = 0.69$. This is a very important result since if we want to achieve such level of DP for the DP-enabled FedAvg in \cite{mcmahan2018general}, we need to compensate in $\delta$ of the DP\footnote{Please refer to \cite{dwork2006calibrating} for the definition of $(\epsilon,\delta)$ DP.}. For example, for $\epsilon = 0.69$ in DP-enabled FedAVG, we need a $\delta \approx 1.1$. The value of $\delta$ increases as $\epsilon$ decreases, however, our proposed aggregation will always yield $\delta =0$ for all $\epsilon$ definitions. Therefore, as observed from Figure 5, we can increase the privacy of the aggregation by increasing the number of bits used in BR. Note that, this will not increase the communication overhead while improving the privacy - a feature that no other FL approach can provide. 

\subsection{Convergence Rate Analysis}

\begin{figure*}[t]
	\centering
	\begin{subfigure}[b]{0.45\textwidth}
		\includegraphics[width=\textwidth]{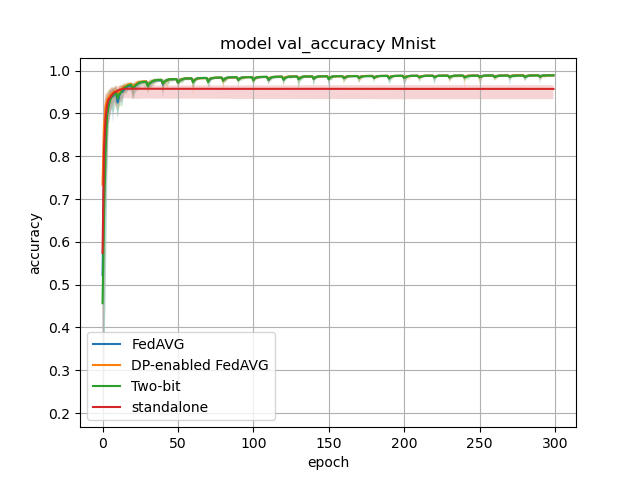}
		\caption{MNIST}
		\label{fig:Mnist}
	\end{subfigure}
	~ 
	\begin{subfigure}[b]{0.45\textwidth}
		\includegraphics[width=\textwidth]{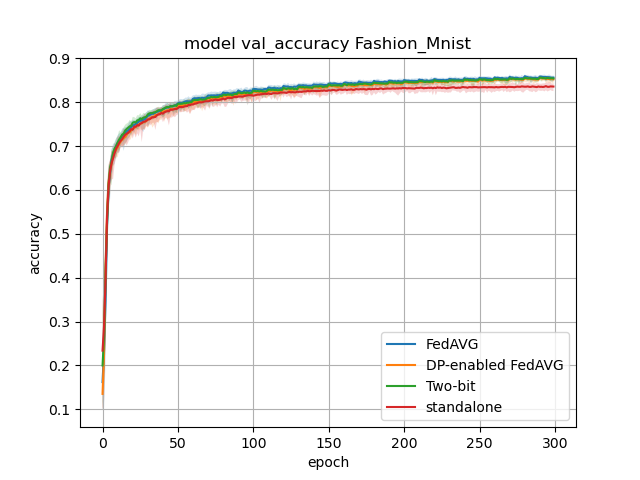}
		\caption{Fashion MNIST}
		\label{fig:Fashion_Mnist}
	\end{subfigure}
	~ 
	\begin{subfigure}[b]{0.45\textwidth}
		\includegraphics[width=\textwidth]{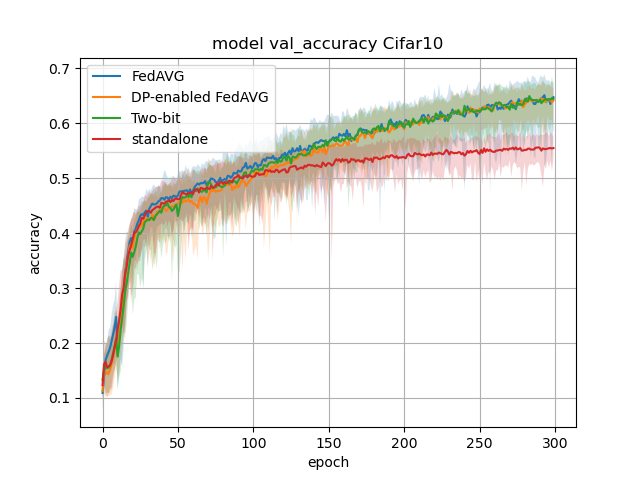}
		\caption{CIFAR10}
		\label{fig:Cifar10}
	\end{subfigure}
	~ 
	\begin{subfigure}[b]{0.45\textwidth}
		\includegraphics[width=\textwidth]{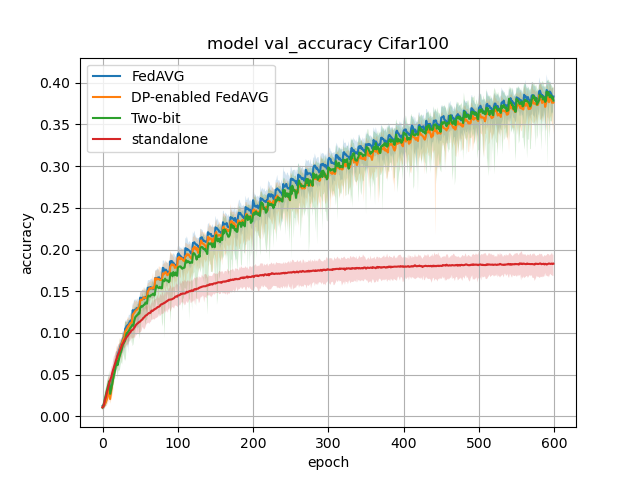}
		\caption{CIFAR100}
		\label{fig:Cifar100}
	\end{subfigure}
	\caption{experiment results}\label{3fig}
\end{figure*}

Figure \ref{3fig} compares our proposed method with FedAVG, DP enabled FedAVG, and standalone case in terms of convergence rate. As you can see from Figure \ref{3fig}, our 2-bit aggregation mechanism has almost same convergence rate compared to the FedAVG and DP-enabled FedAVG. This is an outstanding result because, despite the fact that 2-bit aggregation only transmits two bits of data per iteration, it requires the same number of communication rounds as the other FL methods to converge. This means that the 2-bit aggregator not only reduces the instantaneous communication overhead but also in the long term it has requires less communication resources than the other FL methods. Figure \ref{3fig} also indicates that any FL algorithm (including our proposed method) outperforms the stanadlone agent that trains its local model only on its local data.

\subsection{Large Scale Systems}
\begin{figure}[]
	\centering
	\includegraphics[width=\columnwidth]{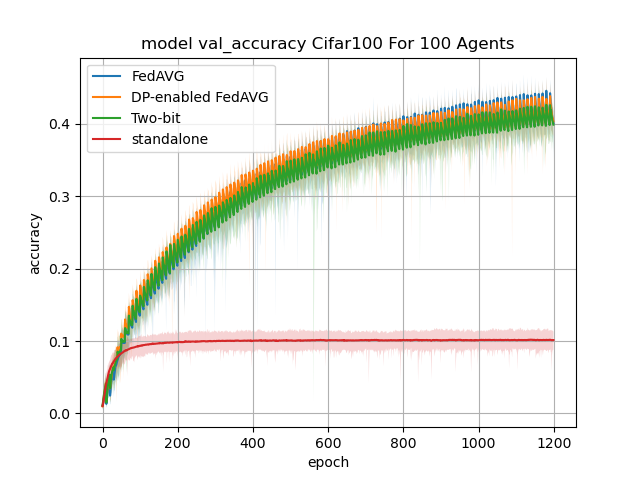}
	\caption{CIFAR100 dataset with 100 nodes}
	\label{fig:cifar100_100agents}
\end{figure}
We also have applied our proposed mechanism to a large scale FL scenario. In Figure \ref{fig:cifar100_100agents}, we have considered 100 nodes with $p=32$. In this case the total number of samples for CIFAR-100 is equally shared between 100 nodes. (excluding the test set which is equal for all of the nodes.) From Figure \ref{fig:cifar100_100agents}, we observe that the proposed 2-bit aggregation yields almost the same performance as the other FL mechanisms, while reducing the communication overhead. The communication overhead becomes a crucial parameter in large scale FL scenarios. For example, if there was a limitation in the communication overhead, our proposed method can incorporate $p/2$ times more nodes than the other FL methods since we can save $p/2$ times in the communication overhead.

\section{Conclusion}
In this paper we have proposed a novel FL mechanism to improve data privacy, and reduce uplink transmission overhead. In this approach each node transmits only two-bits per parameter to the central aggregator instead of a whole BR value, resulting in a reduced communication overhead and increased privacy level. Experimental results also confirm that our proposed aggregation mechanism achieves similar accuracy and convergence rate and a higher privacy level on MNIST, Fashion MNIST, CIFAR-10, and CIFAR-100 compared to other FL approaches.

\newpage
\bibliography{references}
\end{document}